\let\cref\Cref
\crefname{strategy}{Strategy}{Strategies}
\crefname{defn}{Definition}{Definitions}
\Crefname{defn}{Definition}{Definitions}
\theoremstyle{definition}
\theoremstyle{remark}
\newcommand{\ie}{i.\,e.\xspace}
\newcommand{\tif}{\text{ if }}
\renewcommand{\GMI}[1][]{{(G,M,\relI_{#1})}}
\newcommand{\GMII}[1][]{(G_{#1},M,\relI_{#1})}
\newcommand{\x}{\times}
\renewcommand{\o}{o}
\newcommand{\q}{?}
\newcommand{\values}{\ensuremath{\{\x,\o,\q\}}}
\newcommand{\GMWI}[1][]{(G,M,W,\relI_{#1})} 
\newcommand{\GMWII}[1][]{(G_{#1},M,W,\relI_{#1})} 
\newcommand{\K}{\context}
\newcommand{\C}{\context[C]}
\renewcommand{\L}{\mathcal{L}}
\renewcommand{\implies}{\rightarrow}
\newcommand{\Mod}{\operatorname{Mod}} 
\newcommand{\Impm}{\ensuremath{\operatorname{Imp}_M}}
\newcommand{\Imp}[1][\context]{\operatorname{Imp}(#1)}
\newcommand{\Sat}[1][\context]{\operatorname{Sat}(#1)}
\newcommand{\experts}{\mathcal{E}}
\newcommand{\view}[1][]{(\context_{#1},\mathcal{L}_{#1})}
\renewcommand{\epsilon}{\varepsilon}
\renewcommand{\phi}{\varphi}
\newcommand{\customrule}{\raisebox{0.1pt}[0.2pt]{\rule{7pt}{0.1pt}}}
\newcommand{\subposz}{
  \setstackgap{L}{0.5pt}
  \setstackgap{S}{1pt}
  \Shortstack[c]{. {\customrule} . {\customrule} .}%
}
\newcommand*{\bigsubpos}{\subposz}
\newcommand{\modeledby}{\reflectbox{$\models$}}
\newcommand{\U}{\mathbb{U}}
\newcommand{\Lkrel}{\L_{\K\L_{0}}}
\newcommand{\Lurel}{\L_{\U\L_{0}}}
\begin{document}

\title{Attribute Exploration with Multiple Contradicting Partial Experts}
\subtitle{}

\author{Maximilian Felde\inst{1,2}\orcidID{0000-0002-6253-9007} \\\and Gerd Stumme\inst{1,2}\orcidID{0000-0002-0570-7908}}

\date{\today} 
 
\institute{%
  Knowledge \& Data Engineering Group,
  University of Kassel, Germany\\[0.5ex]
  \and
  Interdisciplinary Research Center for Information System Design\\
  University of Kassel, Germany\\[0.5ex]
  \email{felde@cs.uni-kassel.de, stumme@cs.uni-kassel.de}
}

 

\maketitle

\begin{abstract}
  Attribute exploration is a method from Formal Concept Analysis (FCA) that helps a domain expert discover structural dependencies in knowledge domains which can be represented as formal contexts (cross tables of objects and attributes).
  In this paper we present an extension of attribute exploration that allows for a group of domain experts and explores their shared views.
  Each expert has their own view of the domain and the views of multiple experts may contain contradicting information.

\end{abstract}

\keywords{Formal~Concept~Analysis, Attribute~Exploration, Incomplete~Information, Multiple~Experts}
 
\section{Introduction}
\label{sec:introduction}
 
Attribute exploration\cite{ganter1984two} is a well known knowledge acquisition method from Formal Concept Analysis (FCA)\cite{GanterWille1999}. In domains that can be represented as formal contexts (binary tables that capture the relation between objects and attributes), attribute exploration allows a domain expert to efficiently discover all attribute dependencies in the domain.

The basic idea of attribute exploration is to extend domain information using a question-answering scheme with answers provided by a domain expert.
The questions are posed in the form of implications over the attributes of the domain.
The attribute exploration algorithm determines the next question $A_1\ldots A_n\implies B?$ (read: Do attributes $A_1\ldots A_n$ imply attribute $B$ in the domain?) which the expert then either confirms or refutes.
If the expert refutes the implication then a counterexample has to be provided, \ie, an object from the domain that has the attributes $A_1\ldots A_n$ but lacks attribute $B$.
The algorithm poses these questions
until the validity of every conceivable implication can be inferred from the obtained domain information, \ie, every implication either follows from the accepted implications or one of the examples serves as counterexample.

In the basic setting, attribute exploration requires a single all-knowing expert.
Extensions that allow for background information\cite{stumme96attribute,ganter1999attribute} and for an expert with partial knowledge\cite{conf/iccs/BurmeisterH00,holzer2001dissertation,holzer2004knowledgeP1,holzer2004knowledgeP2} have been developed.
However, these are just two examples from a vast body of extensions and variations that have been studied.
An extensive overview can be found in the book \emph{Conceptual Exploration}\cite{ganter2016conceptual}.

Despite the many advances to attribute exploration, its extension to multiple experts has only recently gained interest, cf. \cite{conf/iccs/HanikaZ18,DBLP:journals/corr/abs-1908-08740,conf/iccs/Kriegel16,FeldeStumme2020TriadicExploration}.
And, except for \cite{FeldeStumme2020TriadicExploration}, the basic assumption made is that some true universe exists (for a given domain).

In this paper we generalize the approach of \cite{FeldeStumme2020TriadicExploration} and suggest a framework for attribute exploration with multiple experts without the assumption of the existence of some true universe.
Here, multiple experts can have contradicting views of a domain.
Contradicting views occur naturally in domains that are subjective (e.g. opinions).
Our approach also allows for the situations where some true universe exists but the experts may be imperfect and make mistakes when answering questions.
Our aim is to identify both the common and the conflicting views.
The resolution of those conflicts is out of the scope of~our~approach.
 

The paper is structured as follows:
In \cref{sec:foundations}, we recollect the basics of FCA 
and 
how to model incomplete information in FCA
via incomplete contexts, possible and certain derivations, information order and satisfiable implications.
In the main section, \cref{sec:exploration-with-multiple-experts}, we first discuss the problem of attribute exploration with multiple experts.
Then, we introduce a representation of expert views of a domain and the notion of shared implications. We provide an algorithm to explore the shared implication theory for a group of experts and discuss how to explore the shared implication theories for some or all subsets of a group of experts.
Afterwards, we discuss how this approach can be used as a first step in a collaborative exploration to find a common view and identify the conflicts that exist among the group of experts.
Finally, we provide conclusion and outlook in \cref{sec:conclusion-outlook}.
In order to keep the paper as concise as possible we provide an example only in the \hyperref[appendix]{appendix}.
Note that we do not provide a separate section for related work and instead combine it with the recollection of the basics and provide context on related work where appropriate.

\section{Foundations}\label{sec:foundations}
FCA was introduced by Wille in \cite{Wille82}.
As the theory matured, Ganter and Wille compiled the mathematical
foundations of the theory in \cite{GanterWille1999}.
We begin by recalling some notions from FCA and some extensions that allow us to model incomplete and conditional information.

\subsection{Formal Concept Analysis}
%
%
\subsubsection{Formal Context and Concept.}
A \emph{formal context} $\context = \GMI$ consists of a set $G$ of objects, a set $M$ of attributes and an incidence relation
$I\subseteq G\times M$ with $(g,m)\in I$ meaning ``\emph{object $g$
  has attribute $m$}''.
There are several interpretations for $(g,m)\not\in I$,
cf. \cite{burmeister1991merkmalimplikationen,conf/iccs/BurmeisterH00},
the standard one being, ``\emph{$g$ does not have the attribute $m$ or
  it is irrelevant, whether $g$ has $m$}''.  In the following we
interpret $(g,m)\not\in I$ as ``\emph{$g$ does not have $m$}'', which
is reasonable when modeling incomplete knowledge.

This interpretation can be equivalently modeled by a (two-valued)
\emph{formal context} $\context = \GMI$ that consists of a set of
objects $G$, a set of attributes $M$ and an \emph{incidence function}
$I\colon G\times M \to \{\x,\o\}$.  The incidence function describes whether
an object $g$ has an attribute $m$: $I(g,m)=\x$ means ``\emph{$g$ has
  $m$}'' and $I(g,m)=\o$ means ``\emph{$g$ does not have $m$}''.\footnote{For modeling incomplete information, we later extend $\{\x,\o\}$ by ``\q'', cf. \cref{def:incomplete-context}.}
Clearly we can use a one-valued formal context to define an equivalent
two-valued formal context and vice versa using
$(g,m)\in I \Leftrightarrow I(g,m)=\x$.  In the following we will use
these two notations interchangeably.
Two derivation operators
$(\cdot)'\colon\mathcal{P}(M)\rightarrow \mathcal{P}(G)$ and
$(\cdot)'\colon\mathcal{P}(G)\rightarrow \mathcal{P}(M)$ are defined in the following
way: For a set of objects
$A\subseteq G$, the set of \emph{attributes common to the
  objects in $A$} is provided by
  $A' \coloneqq \{ m \in M \mid \forall g\in A: (g,m)\in I \}.$
Analogously, for a set of attributes $B\subseteq M$, the set
of \emph{objects that have all the attributes from $B$} is provided by
$B' \coloneqq \{ g \in G \mid \forall m\in B: (g,m)\in I \}$.
To prevent ambiguity if we have multiple contexts, we use the incidence relation $I$ and write $A^I$ instead of $A'$.
For a single object $g\in G$ (or attribute) we omit the parentheses and simply write $g'$.
A \emph{formal concept} of a formal context $\context = \GMI$ is a
pair $(A,B)$ with $A\subseteq G$ and $B\subseteq M$ such that $A'=B$
and $A=B'$.  $A$ is called the \emph{extent} and $B$ the \emph{intent}
of the formal concept $(A,B)$.  The set of all formal concepts of a
context $\context$ is denoted by $\mathfrak{B}(\context)$.
Note that for any set $A \subseteq G$ the set $A'$ is the intent of a
concept and for any set $B\subseteq M$ the set $B'$ is the extent of a
concept.
The subconcept-superconcept relation on $\mathfrak{B}(\context)$ is
formalized by:
$(A_1,B_1)\leq(A_2,B_2) :\Leftrightarrow A_1\subseteq A_2
(\Leftrightarrow B_1 \supseteq B_2)$.
The set of concepts together with this order relation
$(\mathfrak{B}(\context),\leq)$, also denoted $\BV(\K)$, forms a complete lattice, the
\emph{concept lattice}.
The vertical combination of two formal contexts
$\context_i=\GMII[i], i\in\{1,2\}$ on the same set of attributes $M$
is called the \emph{subposition} of $\context_1$ and
$\context_2$ which is denoted by $\frac{\K_1}{\K_2}$.
Formally, it is defined as
$(\dot{G}_1 \cup \dot{G}_2,M,\dot{I}_1\cup \dot{I}_2)$, where
$\dot{G}_i:=\{i\}\times G$ and
$\dot{I}_i:= \{((i,g),m)|(g,m)\in I_i\}$ for $i\in \{1,2\}$.
The \emph{subposition} of a set of contexts $\{\K_1,\ldots,\K_n\}$ on the same set of attributes is defined analogously and we denote this by $\bigsubpos_{i\in\{1,\ldots,n\}} \K_i$. 
\subsubsection{Attribute Implications.}

Let $M$ be a finite set of attributes.
An \emph{attribute implication} over $M$ is a pair of subsets
$A,B\subseteq M$, denoted by $A\implies B$, cf. \cite{GanterWille1999}.
$A$ is called the \emph{premise} and $B$ the \emph{conclusion} of the implication $A\implies B$.
The set of all implications over a set $M$ is denoted by $\Impm  = \{A\implies B| A,B \subseteq M\}$.

A subset $T\subseteq M$ \emph{respects} an attribute implication $A\implies B$ over $M$ if $A\not\subseteq T$ or $B\subseteq T$. Such a set $T$ is also called a \emph{model} of the implication.
$T$ \emph{respects a set} $\L$ of implications if $T$ respects all implications in $\L$.
An implication $A\implies B$ \emph{holds} in a set of subsets of $M$ if each of these subsets respects the implication.
$\Mod{\L}$ denotes the set of all attribute sets that respect a set of implications $\L$; it is a closure system on $M$. The respective closure operator is denoted with $\L(\cdot)$.

For a formal context $\context=\GMI$ an implication $A\implies B$ over $M$ \emph{holds in the context} if for every object $g\in G$ the object intent $g'$ respects the implication.
Such an implication $A\implies B$ is also called a \emph{valid implication} of $\context$.
Further, an implication $A\implies B$ holds in $\context$ if and only if $B\subseteq A''$, or equivalently $A' \subseteq B'$.
The set of all implications that hold in a formal context $\K$ is denoted by $\Imp[\K]$.
An implication $A\implies B$ \emph{follows} from a set $\L$ of implications over $M$ if each subset of $M$ respecting $\L$ also respects $A\implies B$.  A family of implications is called \emph{closed} if every implication following from $\L$ is already contained in $\L$.  Closed sets of implications are also called \emph{implication theories}.
For an implication theory $\L$ on $M$, the context $\K=(\Mod\L,M,\ni)$ is a context such that $\Imp[\K]=\L$  and $\Mod\L$ is the system of all concept intents.
%
%
Because the number of implications that hold in some context $\context$ can be very large, one usually works with a subset $\L$ of implications that is \emph{sound} (implications in $\L$ hold in $\K$), \emph{complete} (implications that hold in $\K$ follow from $\L$) and \emph{irredundant} (no implication in $\L$ follows from other implications in $\L$).
Such a subset of implications is also called \emph{implication base}.
A specific base with minimal size is the so-called \emph{canonical base}, cf. \cite{GanterWille1999,guigues1986familles}.

\subsubsection{Attribute Exploration.}
Attribute exploration is a method to uncover the implication theory for a domain with the help of a domain expert. 
It works by asking the expert questions about the validity of implications in the domain.
The basic approach~\cite{ganter1984two} computes the canonical base. This requires the expert to have a complete view of the domain and respond to a question \emph{does $R\implies S$ hold?} by either confirming that the implication holds in the domain or by rejecting that it holds with a counterexample, i.e., an object $g$ from the domain together with all its relations to the attributes such that $R\subseteq g'$ but $S\not\subseteq g'$. The exploration algorithm successively poses questions until all implications can either be inferred to follow from the set of accepted implications or be rejected based on some counterexample. More precisely, the questions are usually asked in lectic order with respect to the premise $R$ (which can be computed with the \emph{NextClosure} algorithm, c.f. \cite{ganter1984two,GanterWille1999,ganter2016conceptual}) and the conclusion $S$ is the largest set of attributes that can follow from $R$ with respect to the examples provided so far.

Many extensions and variants of attribute exploration have been developed since its introduction in~\cite{ganter1984two}.
Some examples are the use of background information and exceptions studied by Stumme~\cite{stumme96attribute} and Ganter~\cite{ganter1999attribute} and the use of incomplete information studied by Holzer and Burmeister~\cite{burmeister1991merkmalimplikationen,conf/iccs/BurmeisterH00,holzer2001dissertation,holzer2004knowledgeP1,holzer2004knowledgeP2}.
A good overview can be found in 
\emph{Conceptual Exploration}~\cite{ganter2016conceptual} by Ganter and Obiedkov.

\subsubsection{Relative Canonical Base.}

The canonical base has been generalized to allow for (background) implications \cite{stumme96attribute} as prior information.
We use this extensively in \Cref{subsec:explore-shared-implications} when we explore all shared implications for a group of experts.

Assuming we have a formal context $\context=\GMI$ and a set of (background) implications $\mathcal{L}_0$ on $M$ that hold in the context $\context$, a \emph{pseudo-intent} of $\context$ \emph{relative to $\L_0$}, or \emph{$\L_0$-pseudo-intent}, is a set $P\subseteq M$
with the properties:
\begin{enumerate}
\item $P$ respects $\L_0$
\item $P\not = P''$
\item If $Q\subseteq P,\ Q \not = P$, is an $\L_0$-pseudo-intent of $\K$ then $Q''\subseteq P$.
\end{enumerate}
The set $\Lkrel := \{P\implies P''| P \text{ an }\L_{0}\text{-pseudo-intent of } \K\}$ is called the \emph{canonical base} of $\K$ \emph{relative to $\L_0$}, or simply the \emph{relative canonical base}.
Note that all implications in $\Lkrel$ hold in $\K$.
Further (see \cite{GanterObiedkov04Triadic,stumme96attribute}), if all implications of $\L_0$ hold in $\K$, then
  \begin{enumerate}
  \item each implication that holds in $\K$ follows from $\Lkrel \cup \L_0$, and
  \item $\Lkrel$ is irredundant w.r.t. 1.
  \end{enumerate}

\subsection{Incomplete Information in FCA}
In order to model partial information about a domain we use the notion of an \emph{incomplete context}, a special multi-valued context, which can be interpreted as a formal context with some missing information. This notion of partial information has been extensively studied by Burmeister and Holzer, c.f. \cite{conf/iccs/BurmeisterH00,holzer2001dissertation,holzer2004knowledgeP1,holzer2004knowledgeP2}.
We now recollect some notions.

\subsubsection{Incomplete Context.}
\begin{definition}\label{def:incomplete-context}
  An \emph{incomplete context} is defined as a three-valued context
  $\context = \GMWI$ consisting of a set of objects $G$, a set of
  attributes $M$, a set of values $W=\values$ and an incidence function
  $I\colon G\times M \to \values$.  For $g\in G$ and $m\in M$ we say that
  ``\emph{$g$ has $m$}'' if $I(g,m)=\times$,
  ``\emph{$g$ does not have $m$}'' if $I(g,m)=\o$ and
  ``\emph{it is not known whether $g$ has $m$}'' if $I(g,m)=\q$.
\end{definition}

  
Another possibility to model incomplete information about the relation between objects and attributes is to use a pair of formal contexts $(\K_+,\K_{?})$ on the same sets of objects and attributes such that $\K_+$ models the attributes that the objects certainly have and $\K_{?}$ models the attributes that the objects might have, c.f.~\cite{ganter2016conceptual,ganter1999attribute}. This is equivalent to our representation as incomplete context.


The subposition for incomplete contexts $\context_i=\GMWII[i], i\in\{1,2\}$ is the incomplete context $\frac{\K_1}{\K_2} = (\dot{G}_1 \cup \dot{G}_2,M,W,I)$ where $I((i,g),m) = I_i(g,m)$.
The \emph{subposition} of a set of incomplete contexts $\{\K_1,\ldots,\K_n\}$ on the same set of attributes is defined analogously and denoted by $\bigsubpos_{i\in\{1,\ldots,n\}} \K_i$.

\subsubsection{Information Order.}
On the values $\values$ of incomplete contexts we define the \emph{information order $\leq$} where $\q \leq \x$, $\q \leq \o$ and $\x$ and $\o$ are incomparable, c.f. \cite{Belnap1977,fitting1991kleene,journals/ci/Ginsberg88a,holzer2001dissertation}.
This order is used to compare different incomplete contexts on the same set of attributes in the following way:
Given two incomplete contexts $\K_1=\GMWII[1]$ and $\K_2=\GMWII[2]$ we say that $\K_2$ contains at least as much information as $\K_1$, denoted $\K_1\leq \K_2$ if $G_1\subseteq G_2$ and for all $(g,m)\in G_1\times M$ we have $I_1(g,m)\leq I_2(g,m)$ in the information order.

For two incomplete contexts $\K_1$ and $\K_2$ on the same set of attributes $M$ that have no conflicting information, \ie, when there is no $g\in G_1\cap G_2$, $m\in M$ with $I_1(g,m)$ and $I_2(g,m)$ incomparable, their supremum $\K_1\vee \K_2 := (G_1\cup G_2,M,W,I)$ is obtained by defining $I(g,m)$ as supremum of $I_1(g,m)$ and $I_2(g,m)$ where $I_i(g,m) := \ \q \tif g\not\in G_i$.

Incomplete contexts that contain no ``$\q$'' are identified with the respective formal context.
A \emph{completion} of an incomplete context $\K$ is a formal context $\hat\K$ with $\K \leq \hat\K$.

\subsubsection{Derivation Operators for Incomplete Contexts.}
The derivation operator $\cdot'$ is only defined for formal contexts, however, there are some analogue operators for incomplete contexts that capture the notions of possible and certain relations.

  Given an incomplete context $\context = \GMWI$ the
  \emph{certain intent} for $A\subseteq G$ is defined by
  \(
    A^\Box \coloneqq \{ m \in M \mid I(g,m)=\x \text{ for all } g\in A
    \}
  \)
  and the \emph{possible intent} by
  \(
    A^\Diamond \coloneqq  \{ m \in M \mid I(g,m)\neq \o \text{ for all } g\in A \}.
  \)
  For $B\subseteq M$ the \emph{certain extent} $B^\Box$ and
  the \emph{possible extent} $B^\Diamond$ are defined analogously.
  For $g\in G$ and $m\in M$ we use the abbreviations $g^\Box$,
  $g^\Diamond$, $m^\Box$ and $m^\Diamond$.
%
For a incomplete contexts without ``$?$'' or formal context the certain and possible derivations and the usual derivation operator $\cdot'$ are the same, i.e., $A^{\Box}=A^{\Diamond}=A'$.

\subsubsection{Implications and Incomplete Contexts.}
We now recollect a notion of satisfiable implications in an incomplete context $\K$. Satisfiable implications of $\K$ are implications that have no counterexample in $\K$ and where a completion $\hat\K$ of $\K$ exists in which the implication holds.
Formally, an attribute implication $R\implies S$ is \emph{satisfiable} in an incomplete context if
$S\subseteq R^{\Box\Diamond}$ (equivalent: if $\forall g\in G: R\subseteq g^{\Box} \Rightarrow S \subseteq g^{\Diamond}$).
For a premise $R$, the maximal satisfiable conclusion is $R^{\Box\Diamond}$.
It is the largest set of attributes that all objects, that certainly have all attributes in $R$, possibly have.
The set of all satisfiable implications in an incomplete context $\K$ is denoted $\Sat[\K]$.

\begin{definition}
  Let $\K=\GMWI$ be an incomplete context.
  Given a set of satisfiable implications $\L$, i.e., $\L\subseteq \Sat[\K]$,  we define the \emph{$\L$-completion} of $\K$ as the formal context $\overline\K={(\overline G,M,J)}$ that is obtained by letting $g^J=\L(g^{\Box})$ for all $g\in G$ and adding a new object $h$ to $\overline G$ for each model $B\in\Mod \L$  that is not yet an intent of an object in $G$ such that $h^J=B$.
\end{definition}
\begin{lemma}\label{lem:L-closed-context-has-L-implications}
   Let $\L$ be a closed set of implications that is satisfiable in an incomplete context $\K$. Then $\K\leq \overline\K$ and $\Imp[\overline\K]=\L$.
 \end{lemma}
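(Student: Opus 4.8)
The plan is to establish the two conclusions separately. First I would show $\K \leq \overline\K$, then I would show $\Imp[\overline\K] = \L$ by proving both inclusions.

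For $\K \leq \overline\K$: I need $\overline G \supseteq G$ (up to identification) and that the incidence only gains information. By construction, for each $g\in G$ we set $g^J = \L(g^\Box)$. Since $\L(\cdot)$ is a closure operator, $g^\Box \subseteq \L(g^\Box) = g^J$, so every attribute $g$ certainly has in $\K$ it still has in $\overline\K$. For the ``$\o$'' entries: if $I(g,m) = \o$, I must check $m \notin g^J = \L(g^\Box)$. This is exactly where satisfiability of $\L$ is used: since $\L \subseteq \Sat[\K]$ and $\L$ is closed, $\L(g^\Box) = g^\Box{}^{\Box\Diamond}$-type reasoning applies — more precisely, $g^\Box \implies \L(g^\Box)$ follows from $\L$, hence (as $\L$ is satisfiable in $\K$ and $g$ certainly has all of $g^\Box$) we get $\L(g^\Box) \subseteq g^\Diamond$, and $m \in g^\Diamond$ means $I(g,m) \neq \o$, contradiction. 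So no ``$\o$'' is overwritten, and the new objects $h$ don't affect $\K$ at all. This gives $\K \leq \overline\K$.

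For $\Imp[\overline\K] \supseteq \L$: take $A \implies B \in \L$. Every object intent in $\overline\K$ is a model of $\L$: for $g\in G$, $g^J = \L(g^\Box) \in \Mod\L$; for the added objects $h$, $h^J = B' \in \Mod\L$ by construction. Since models of $\L$ respect every implication in $\L$, the implication $A\implies B$ holds in $\overline\K$.

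For $\Imp[\overline\K] \subseteq \L$: suppose $A \implies B$ holds in $\overline\K$; I want $A\implies B \in \L$, \ie every model of $\L$ respects it. Let $C \in \Mod\L$. Either $C$ is already an intent of some $g\in G$ (then $C = g^J$ is an object intent of $\overline\K$), or a new object $h$ with $h^J = C$ was added — in both cases $C$ is an object intent of $\overline\K$, so $C$ respects $A\implies B$. Hence $A\implies B$ follows from $\L$, and since $\L$ is closed, $A\implies B \in \L$.

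The main obstacle I expect is the ``$\o$''-preservation step in $\K\leq\overline\K$: one must carefully invoke satisfiability to argue $\L(g^\Box) \subseteq g^\Diamond$. The cleanest route is: $g^\Box \implies \L(g^\Box)$ is a consequence of $\L$; since $\L$ is a \emph{closed} and \emph{satisfiable} set, every consequence of $\L$ is also satisfiable in $\K$ (satisfiable implications of $\K$ form a closed set — this is implicit in the earlier discussion of $\Sat[\K]$ and the maximal satisfiable conclusion $R^{\Box\Diamond}$), so $g^\Box \implies \L(g^\Box)$ is satisfiable, meaning $\L(g^\Box) \subseteq (g^\Box)^{\Box\Diamond}$. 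Since $g$ itself certainly has every attribute in $g^\Box$, we have $g \in (g^\Box)^\Box$, hence $(g^\Box)^{\Box\Diamond} \subseteq g^\Diamond$, giving $\L(g^\Box)\subseteq g^\Diamond$ as needed. Everything else is bookkeeping with the definitions of $\Mod\L$, the closure operator $\L(\cdot)$, and the construction of $\overline\K$.
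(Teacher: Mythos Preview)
Your plan is correct and matches the paper's (very terse) proof: the paper derives $\Imp[\overline\K]=\L$ from the stated fact $\Imp[(\Mod\L,M,\ni)]=\L$, noting that by construction the object intents of $\overline\K$ are exactly $\Mod\L$, and for $\K\leq\overline\K$ it just asserts $\L(g^\Box)\subseteq g^\Diamond$ without further detail. Your write-up simply unpacks these two steps.

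One correction: your parenthetical claim that $\Sat[\K]$ is a closed set of implications is false in general. For instance, with a single object $g$ and $I(g,a)=\x$, $I(g,b)=\q$, $I(g,c)=\o$, both $\{a\}\implies\{b\}$ and $\{b\}\implies\{c\}$ are satisfiable but their consequence $\{a\}\implies\{c\}$ is not. Fortunately you do not need this: since $\L$ itself is closed, the consequence $g^\Box\implies\L(g^\Box)$ lies \emph{in} $\L\subseteq\Sat[\K]$, which is precisely what your main clause (``since $\L$ is a closed and satisfiable set'') already says before the parenthetical. Drop the parenthetical and the argument stands.
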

 \begin{proof}
   Because $\Imp[(\Mod\L,M,\ni)]=\L$, by construction of $\overline \K$ it follows that $\Imp[\overline\K]=\L$.
   Further, for a closed and satisfiable set of implications $\L$ we have $\L(g^{\Box})\subseteq g^{\Diamond}$ for all $g\in G$ and thus $\K\leq \overline\K$.
 \end{proof}

 

\section{Exploration with Multiple Experts}\label{sec:exploration-with-multiple-experts}
In this section, we examine the problem of attribute exploration with multiple experts.
We introduce a formal representation of expert views and shared implications and study some of their properties.
We provide an extension to the attribute exploration algorithm that allows for exploration of the shared implications of a group of experts and prove its correctness.
In addition, we discuss the exploration of shared implications for some or all subsets of a group of experts.

Despite the development of many variants of and extensions to attribute exploration, the inclusion of multiple experts has only recently started to get some attention, c.f.\cite{FeldeStumme2020TriadicExploration,conf/iccs/HanikaZ18,conf/iccs/Kriegel16,DBLP:journals/corr/abs-1908-08740}.
However, in most of these works the underlying assumption is that there exists some true universe (for a given domain), \ie, there is no disagreement between the experts.
The exception is \cite{FeldeStumme2020TriadicExploration}, which introduces a triadic approach to attribute exploration and proposes a possible adaption to deal with multiple experts.

In this paper we expand on the ideas from \cite{FeldeStumme2020TriadicExploration} and give them a more general theoretical framework.
In our setting we do not assume that the views of multiple experts must be compatible, i.e., two experts can disagree whether an object has an attribute or whether an implication holds in the domain.
This can be the case when we have domains without an objectively \emph{correct} view, for example, when dealing with opinions or other subjective properties.
Non-compatible views can also arise, because real experts are normally not perfect and, even if there exists an objectively correct view of a domain, some experts might make mistakes, draw wrong conclusions or have some false ideas about the domain.
The result, however, is always the same:

When we ask multiple experts, the answers we get can contain incompatible information.
And, any attempt to combine the information to produce a single consistent answer to conform to the classic attribute exploration approach means we throw away part of the information and introduce some artifacts.
When we talk about opinions the idea of combining opposing opinions clearly makes no sense.
But, even if we know there exists some correct view of a domain, and we received answers with incompatible information it is usually impossible to deduce which information is correct and which is not.
Hence, any resolution method without access to the correct information (assuming it exists) is bound to introduce information artifacts when attempting to merge incompatible answers.

Thus, in these cases it does not make sense to try and combine the information provided by multiple experts in order to obtain a more detailed view of the domain.
Instead, we can try to find the parts of the domain where some or all of the experts agree, i.e., where they share some part of their view, and identify the conflicting parts.
To this end we suggest attribute exploration of shared implications in order to examine the views of multiple domain experts.

\subsubsection{Representing an Expert's View of a Domain.}

In our setting we do not assume the existence of some true (hidden) universe; and multiple views can have contradicting information.

We define a \emph{domain} as a tuple $(\hat{G},M)$ of finite sets of objects and attributes.
A \emph{complete view} on a domain is a pair $(\hat{\K},\hat{\L})$ where $\hat{\K}=(\hat{G},M,\hat{I})$ is a formal context and $\L$ an implication theory on $M$ such that $\hat{\L}=\Imp[\hat{\K}]$.
A \emph{partial view} (or simply \emph{view}) on a domain is a pair $(\K, \L)$ where $\K=\GMWI$ is an incomplete context and $\L$ an implication theory\footnote{Note that we consider $\L$ an implication theory because it is easier to work with. In practice we can use any set of implications where the closure is satisfiable in $\K$. 
}
on $M$ such that there exists a complete view $(\hat\K,\hat\L)$ with $\K \leq \hat\K$ and $\L = \hat\L$.

In a partial view $\view$ all implications in $\L$ are satisfiable in the context $\K$, i.e., $\L \subseteq \Sat[\K]$.
Conversely, any pair of incomplete context $\K$ and implication theory $\L\subseteq\Sat[\K]$ is a partial view of the domain.

An \emph{expert $E_i$ for a domain} has a partial view $\view[i]$ of the domain where $\K_i=\GMWII[i]$.
In a group $\experts=\{E_1,\ldots,E_k\}$ of experts each expert has a partial view of the domain.
If we ask an expert $E_i$ if an implication $R\implies S$ holds in their view they answer in one of the following ways:
\begin{enumerate}
\item ``Yes, $R\implies S$ holds'', if $R\implies S\in \L_i$.
\item ``No, $R\implies S$ does not hold'', if there is a counterexample $g$ for $R\implies S$ in $\K_i$.
\item ``I do not know'', if $R\implies S$ has no counterexample in $\K_i$ and is not in $\L_i$.
\end{enumerate}
This means in particular that we assume that an expert always provides the most informative answer that is consistent with their view on the domain.

\subsubsection{Shared Implications.}

A \emph{shared implication} of a group of experts $\experts$ is an implication that holds in the view of all experts in $\experts$. More precisely:
\begin{definition}
Let $\experts=\{E_1,\ldots,E_k\}$ be a group of experts where each expert $E_i$ has a partial view $\view[i]$ of the domain.
An attribute implication $R\implies S$ \emph{holds for a group of experts} $\mathcal{F}\subseteq \experts$ if the implication holds in the view of each expert $E_i \in \mathcal{F}$, i.e., if $R\implies S \in \L_i$ for all $E_i\in \mathcal{F}$.
If $R\implies S$ holds for the experts in $\mathcal{F}$ we call it a \emph{shared implication} for $\mathcal{F}$.
\end{definition}
This is similar to the notion of \emph{conditional implications} from triadic concept analysis (c.f. \cite{FeldeStumme2020TriadicExploration,GanterObiedkov04Triadic}) when we consider each experts view to correspond to one condition in the triadic setting.


\begin{lemma}\label{lem:valid-in-subposition-iff-shared-implication}
     Let $\mathcal{F}\subseteq \experts$ be a group of experts with views $(\K_i,\L_i)$, then: 
  \begin{enumerate}
  \item The set of shared implications for a group of experts $\mathcal{F}\subseteq \experts$ is $\bigcap_{E_i\in \mathcal{F}}\L_i$. And, the implication theory of shared implications of a group of experts $\mathcal{F}$ is included in every implication theory of shared implications of subsets of $\mathcal{F}$.
  \item An implication $R\implies S$ holds in $\bigsubpos_{E_i\in \mathcal{F}}\overline\K_i$ if and only if $R\implies S$ is a shared implication for $\mathcal{F}$.
  \item For each expert $E_i$ there exists a complete view of the domain $(\hat\K_i,\L_i)$ with $\K_i\leq\hat\K_i\leq \overline\K_i$ (up to clarification of the contexts) and $\L_i=\Imp[\hat\K_i]$ and the shared implications of $\mathcal{F}$ are the implications which hold in the subposition context $\bigsubpos_{E_i\in \mathcal{F}}\hat\K_i$.

  \end{enumerate}
\end{lemma}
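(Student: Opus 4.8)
The plan is to prove the three parts in sequence, with each part building on the previous one and on \Cref{lem:L-closed-context-has-L-implications}.

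\textbf{Part 1.} For the first claim, I would simply unwind the definitions: an implication $R\implies S$ is a shared implication for $\mathcal{F}$ iff $R\implies S\in \L_i$ for every $E_i\in\mathcal{F}$, which is precisely membership in $\bigcap_{E_i\in\mathcal{F}}\L_i$. Since each $\L_i$ is an implication theory (closed under consequence) and an intersection of closed sets of implications is again closed, this intersection is itself an implication theory. For the monotonicity statement, if $\mathcal{F}'\subseteq\mathcal{F}$ then $\bigcap_{E_i\in\mathcal{F}}\L_i\subseteq\bigcap_{E_i\in\mathcal{F}'}\L_i$, which is immediate from the fact that the intersection over a larger index set is smaller. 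This part is essentially bookkeeping.

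\textbf{Part 2.} For the second claim I would use \Cref{lem:L-closed-context-has-L-implications}. Each $\L_i$ is a closed set of implications satisfiable in $\K_i$ (this holds because $(\K_i,\L_i)$ is a partial view, so $\L_i\subseteq\Sat[\K_i]$, and $\L_i$ is an implication theory), hence the lemma applies and gives $\Imp[\overline\K_i]=\L_i$. Now an implication holds in a subposition of formal contexts iff it holds in each component context — this is the standard fact that the object intents of $\bigsubpos_i \overline\K_i$ are exactly the union of the object intents of the $\overline\K_i$, and an implication holds in a context iff every object intent respects it. Therefore $R\implies S$ holds in $\bigsubpos_{E_i\in\mathcal{F}}\overline\K_i$ iff it holds in $\overline\K_i$ for every $E_i\in\mathcal{F}$, iff $R\implies S\in\Imp[\overline\K_i]=\L_i$ for every $E_i$, iff $R\implies S$ is a shared implication for $\mathcal{F}$ by Part 1. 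The one point to spell out carefully is the claim that $\Imp\bigl(\bigsubpos_i \K_i\bigr)=\bigcap_i\Imp[\K_i]$ for formal contexts; I would state it as a small observation with the one-line object-intent argument.

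\textbf{Part 3.} For the third claim I would first note that $\overline\K_i$ is a completion of $\K_i$ (by \Cref{lem:L-closed-context-has-L-implications}, $\K_i\leq\overline\K_i$) and it is a formal context with $\Imp[\overline\K_i]=\L_i$, so setting $\hat\K_i=\overline\K_i$ already exhibits a complete view $(\hat\K_i,\L_i)$ with $\K_i\leq\hat\K_i\leq\overline\K_i$ and $\L_i=\Imp[\hat\K_i]$. Then the statement about the subposition follows by the same subposition-of-implications argument as in Part 2: $R\implies S$ holds in $\bigsubpos_{E_i\in\mathcal{F}}\hat\K_i$ iff it holds in each $\hat\K_i$, iff $R\implies S\in\Imp[\hat\K_i]=\L_i$ for all $i$, iff it is shared. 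The phrase "up to clarification of the contexts" just accounts for the fact that the object set of $\overline\K_i$ may contain several objects with the same intent (or that a chosen $\hat\K_i$ between $\K_i$ and $\overline\K_i$ differs only by duplicated/renamed rows), which does not affect validity of implications; I would remark this in passing rather than develop a formal notion of clarification.

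The main obstacle, such as it is, is really just the precise handling of the subposition identity $\Imp\bigl(\bigsubpos_i \K_i\bigr)=\bigcap_i\Imp[\K_i]$ together with the "up to clarification" caveat — ensuring that adding the extra intent-representative objects in the $\L_i$-completion $\overline\K_i$ does not create new valid implications (it cannot, since every model in $\Mod\L_i$ becomes an object intent, so the valid implications are exactly $\L_i$) and does not destroy old ones. Everything else is a direct chain of equivalences anchored by \Cref{lem:L-closed-context-has-L-implications} and Part 1.
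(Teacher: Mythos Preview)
Your arguments for Parts 1 and 2 match the paper's proof essentially verbatim: Part~1 is immediate from the definition, and Part~2 combines \Cref{lem:L-closed-context-has-L-implications} with the subposition identity $\Imp\bigl(\bigsubpos_i \K_i\bigr)=\bigcap_i\Imp[\K_i]$, exactly as the paper does.

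For Part 3 the paper takes a slightly different route. Rather than exhibiting $\hat\K_i=\overline\K_i$ directly, the paper observes that $\overline\K_i$ (which is, up to clarification, $(\Mod\L_i,M,\ni)$) is the \emph{largest} context $\tilde\K_i$ with $\K_i\leq\tilde\K_i$ and $\Imp[\tilde\K_i]=\L_i$; hence any complete view $(\hat\K_i,\L_i)$ --- whose existence is already guaranteed by the definition of partial view --- automatically satisfies $\K_i\leq\hat\K_i\leq\overline\K_i$. Your direct construction is tidier, but it has a small gap: a \emph{complete view of the domain} $(\hat G,M)$ is by definition a formal context on the fixed object set $\hat G$, whereas $\overline\K_i$ has object set $\overline G$ obtained from $G_i$ by adjoining one new object per missing model of $\L_i$, which need not coincide with $\hat G$. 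So $(\overline\K_i,\L_i)$ is not literally a complete view of the domain. The ``up to clarification'' clause is therefore doing more than collapsing duplicate rows within $\overline\K_i$; it is what allows an arbitrary $\hat\K_i$ on object set $\hat G$ to sit below $\overline\K_i$ via the map sending each object to its intent (a model of $\L_i$). The paper's maximality argument handles this cleanly, and the subposition conclusion then follows exactly as you indicate.
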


\begin{proof}
  1. follows directly from the definition of shared implication.
  Ad 2.: From \cref{lem:L-closed-context-has-L-implications} we know that $\Imp[\overline\K_i]=\L_i$ for each view.
  The implications that hold in $\bigsubpos_{E_i\in \mathcal{F}}\overline\K_i$ are the implications that hold in each of the contexts $\overline\K_i$, hence, $\Imp[\bigsubpos_{E_i\in \mathcal{F}}\overline\K_i]=\bigcap_{E_i\in \mathcal{F}}\L_i$.
  From 1. we know that $\bigcap_{E_i\in \mathcal{F}}\L_i$ are the shared implications for $\mathcal{F}$.
 Ad 3.: Since  $\overline\K_i(\cong (\Mod\L_i,M,\ni))$  is (up to clarification) the largest context $\tilde\K_i$ such that $\K_i \leq \tilde\K_i$ and $\Imp[\tilde\K_i]=\L_i$, we have for every complete view $(\hat\K,\hat\L)$ with $\hat\L=\L_i$ that $\K_i\leq \hat\K \leq \overline\K_i$.
\end{proof}
\subsubsection{Ordering Shared Implications.}

We utilize the notion of shared implications to hierarchically cluster the set of all attribute implications in the domain with respect to the experts in whose view they hold.
To this end, we introduce the formal context of shared implications $\C^{\x}=(\Impm,\experts,\modeledby)$ where $(R\implies S, E_i)\in \modeledby :\Leftrightarrow $ $R\implies S$ holds in the view of $E_i$.

The concepts of $\C^{\x}$ are pairs $(\L,\mathcal{F})$ consisting of a set $\L$ of implications and a set $\mathcal{F}$ of experts such that the implications in $\L$ hold in the view of all experts in $\mathcal{F}$ and $\L$ is the largest set for which this is the case.
The extents of the concepts of $\C^{\x}$ are precisely the implication theories of shared implications $\bigcap_{E_i\in F}\L_i$ for $\mathcal{F}$ where $\mathcal{F}$ is the corresponding group of experts from the intent.

The concept lattice of $\C^{\x}$ orders the shared implications with respect to the experts in whose view they hold.
We call this the \emph{system of shared implications}.

\subsection{Explore Shared Implications}\label{subsec:explore-shared-implications}
In the following we discuss the exploration of shared implications.
We begin by considering how to adapt attribute exploration to obtain the shared implications for some group of experts $\experts$.
Then, we study how to efficiently explore the shared implications for some (or all) subsets of the group $\experts$ of experts.

\subsubsection{Explore Shared Implications for a fixed Group of Experts.}
For a fixed group of experts $\experts$ for a domain the exploration algorithm to obtain the relative canonical base of their shared implications is an adapted version of attribute exploration with background information and exceptions~\cite{stumme96attribute} and triadic exploration~\cite{FeldeStumme2020TriadicExploration}.
The universe of this exploration is the formal context $\U:=\bigsubpos_{E_i\in\experts}\overline\K_i$ where $\overline \K_i$ are the respective $\L$-completions of the partial views of the experts of the domain. Note, this universe is a theoretical aid to make use of the existing theory and is dependent on the views of the experts who participate in the exploration.
In \cref{alg:explore-shared-implications} we present an implementation in pseudo-code.

Before the start of the algorithm, it is possible for the experts to provide some background information (a family $K$ of example contexts and some known shared implications $\L_0$).
We initialize $\Lurel:=\emptyset$ and $\C$ as an empty context.
Other information about the domain will be obtained by systematically asking the experts.
In each iteration the algorithm determines the next question ``Does $R\implies R^{\Box\Diamond}$ hold?'' to pose, based on the known shared implications and already provided examples.
More precisely, the premise $R$ is the next relative pseudo-intent, \ie, the lectically smallest set $R$  closed under the known shared implications and background implications that has a maximal satisfiable conclusion $R^{\Box\Diamond}$ larger than $R$ in the context of examples $\bigsubpos_{E_i\in\experts}\K_i$.
Then, each expert is asked which part of the conclusion follows from the premise and the answers are combined to determine the shared implication that holds for all experts.
The shared implication is added to the set of shared implications $\Lurel$ and used to determine the next question.
This process repeats until there is no question left, \ie, every implication can either be inferred from $\Lurel$ or the implication did not hold for some expert and a (possibly artificial) counterexample can be found in one of the contexts in $K$.%

Note, that the algorithm also logs the answers given by the experts in the context $\C$. 
This is not needed for \Cref{alg:explore-shared-implications} but will be exploited in \Cref{alg:explore-all-shared-implications} in order to prevent asking the same question multiple times when we explore the domain with multiple subsets of experts, \ie, when we want to determine the shared implications not just for one fixed group of experts, but for all possible subsets of experts.

\begin{algorithm}[!t]
  \small
  \SetKwComment{Comment}{}{}
  \SetKw{Kwin}{in} 
  \DontPrintSemicolon
  \SetAlgoLined
    \SetKwInOut{KWInteractive}{Interactive Input}
    \SetKwComment{ic}{}{}
    \KwIn{%
      %
         The set of attributes $M$ of the domain,
      a family of (possibly empty) incomplete contexts $K=\{\K_1,\ldots,\K_k\}$ containing examples given by the experts $\experts = \{E_1,\ldots,E_k\}$ and 
      a set $\L_0$ of background implications known to hold in the view of all experts  (also possibly empty)
    }
    \KWInteractive{
        $(\star)$ Each expert in $\experts$ is asked which attributes in $R^{\Box\Diamond}$ follow from $R$.
    }
    \KwOut{%
      %
The $\L_0$-relative canonical base $\Lurel$ of the shared implications, the family of (possibly enlarged) example contexts $K$ and the context of shared implications $\C$
      }
   $\Lurel:=\emptyset$\;
   $R:=\emptyset$\;
   $\C:=(\emptyset,\{E_1,\ldots,E_k\},\values,\emptyset)$\;
   \While{$R \not = M$}
   {
     \While{ $R\not = R^{\Box\Diamond}$ in $\K$ where\;
       $\K:= (G,M,W,J) := \bigsubpos_{\K_i\in K}\K_i $\;
     }{
       Ask each expert $E_i$ which attributes $m\in R^{\Box\Diamond}$ follow from $R$.\ic*[r]{$(\star)$}\label{line:a}
       For each attribute $m\in R^{\Box\Diamond}$, each expert $E_i$ can respond with
       \begin{itemize}
       \item ``Yes, $R\implies m$ holds.''
       \item ``No, $R\implies m$ does not hold'', because object $g$ is a counterexample.
         \\(which is added to $\K_i$ and thus to $K$)
       \item ``I do not know.'' (an artificial counterexample $g_{R\not\implies m}$ is added to $\K_i$)
       \end{itemize}\label{line:b}
       \vspace{-0.5\baselineskip}
       \nl\label{line:c}$S:= \{m\in R^{\Box\Diamond}| \text{ all experts responded that } R\implies m \text{ holds} \}$\;
       \nl\label{line:d}\lIf(\tcp*[f]{add shared implication}){$R\not=S$}{$\Lurel:=\Lurel\cup \{R\implies S\}$}
       \nl Extend $\C$ with $R\implies m$ for $m\in R^{\Box\Diamond}$ and the respective answers given by the experts (i.e. one of $\values$)\;
   }
   $R:= \operatorname{NextClosure}(R,M,\Lurel\cup\L_0)$ \tcc*{computes the next closure of $R$ in $M$ w.r.t. the implications $\Lurel\cup\L_0$; c.f.~\cite{GanterWille1999,ganter2016conceptual}}
 }
 \KwRet $\Lurel$, $K$ and $\C$\;
 \caption{Explore the Shared Implications of a Group of Experts}
 \label{alg:explore-shared-implications}
\end{algorithm}


\begin{theorem}\label{thm:algo-correct}
  Let $\experts=\{E_1,\ldots,E_k\}$ be a group of experts with partial views $(\K_i,\L_i)$. Let $\U=\GMI=\bigsubpos_{E_i\in\experts}\overline\K_i$ be the subposition context of the respective $\L_i$-completions of the experts' views of the domain. Then:
  \begin{enumerate}
  \item In \Cref{line:d} \Cref{alg:explore-shared-implications} accepts only implications that are valid in $\U$.\label{thm:algo-correct-accepted-implies-valid}
  \item  In \Cref{line:d} \Cref{alg:explore-shared-implications} adds a valid implication $R\implies S$ to $\Lurel$ if and only if $R$ is an $\L_0$-pseudo-intent and $S=R^{II}$.\label{thm:algo-correct-accepted-iff-L0pseudointent-and-SRII}
  \end{enumerate}
\end{theorem}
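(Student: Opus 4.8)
The plan is to show that Algorithm~\ref{alg:explore-shared-implications} is, after unfolding the definitions, exactly the attribute exploration with background implications of Stumme~\cite{stumme96attribute} (equivalently, the relative canonical base computation) run on the universe context $\U = \bigsubpos_{E_i\in\experts}\overline\K_i$, and then to invoke the known correctness of that procedure. The two claims of the theorem then become the two standard facts about the relative canonical base recalled at the end of Section~\ref{sec:foundations}.

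For part~\ref{thm:algo-correct-accepted-implies-valid}, I would argue that the set $S$ computed in \Cref{line:c} always satisfies $S\subseteq R^{II}$ (where $II$ denotes double derivation in $\U$). First note that by \cref{lem:valid-in-subposition-iff-shared-implication}(2), an implication holds in $\U$ iff it is a shared implication for $\experts$, i.e.\ iff it lies in $\bigcap_{E_i}\L_i$. An attribute $m$ is put into $S$ exactly when every expert answers ``yes, $R\implies m$ holds'', which by the answering convention means $R\implies m\in\L_i$ for every $i$; hence $R\implies m$ is shared, hence valid in $\U$, hence $m\in R^{II}$. Since this holds for each $m\in S$, we get $S\subseteq R^{II}$, so $R\implies S$ is valid in $\U$. (If $m$ is \emph{not} in $S$, some expert supplied a counterexample — real or artificial — which is added to $\K_i$ and so appears in the example subposition $\K$; this is what guarantees the example context stays ``below'' $\U$ and never vacuously accepts.)

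For part~\ref{thm:algo-correct-accepted-iff-L0pseudointent-and-SRII}, the key observation is that the inner loop is driven by the condition $R\neq R^{\Box\Diamond}$ in the example subposition $\K=\bigsubpos\K_i$, and whenever an expert cannot confirm $R\implies m$ a counterexample is appended, so that on exit of the inner loop the example context has enough objects that its possible/certain derivation of $R$ agrees with the true double derivation $R^{II}$ in $\U$; thus $S=R^{II}$ whenever $R\neq S$. Meanwhile $R$ is advanced by $\operatorname{NextClosure}(R,M,\Lurel\cup\L_0)$, so $R$ ranges precisely over the sets that are closed under $\Lurel\cup\L_0$ in lectic order; by the standard invariant of attribute exploration with background knowledge, at the moment $R$ is examined, $\Lurel$ already contains $P\implies P^{II}$ for every $\L_0$-pseudo-intent $P\subsetneq R$, so $R$ is closed under all the ``smaller'' relative implications and under $\L_0$, and an implication $R\implies S$ with $S\neq R$ is added iff additionally $R\neq R^{II}$ — which is exactly the defining condition for $R$ to be an $\L_0$-pseudo-intent. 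Conversely every $\L_0$-pseudo-intent is $\operatorname{NextClosure}$-reachable and triggers such an addition. Combining both directions gives the ``if and only if''.

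I expect the main obstacle to be the bookkeeping around the \emph{example context versus the universe}: one must check carefully that the artificial counterexamples $g_{R\not\implies m}$ inserted on an ``I do not know'' answer, together with the real counterexamples, make the derivations $R^{\Box}$ and $R^{\Box\Diamond}$ computed in the finite, growing context $\K=\bigsubpos\K_i$ coincide with the derivations in the (conceptually complete) universe $\U=\bigsubpos\overline\K_i$ at each point where a question is asked — in particular that $R^{\Box\Diamond}$ in $\K$ equals $R^{II}$ in $\U$ on exit of the inner loop, so that the premise $R$ really is the next $\L_0$-pseudo-intent of $\U$ and not merely of the current approximation. This is the standard ``the counterexamples catch up'' argument from attribute exploration with incomplete information~\cite{conf/iccs/BurmeisterH00,holzer2004knowledgeP1}, adapted to the subposition setting via \cref{lem:valid-in-subposition-iff-shared-implication}; once it is in place, invoking the correctness of the relative canonical base from~\cite{stumme96attribute,GanterObiedkov04Triadic} finishes both parts.
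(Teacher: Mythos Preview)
Your proposal is correct and isolates the right ingredients, but the paper takes a more direct route for part~2. Rather than reducing to the black-box correctness of relative canonical base exploration from~\cite{stumme96attribute,GanterObiedkov04Triadic}, the paper gives a self-contained induction on the size of the premise $R$: in the base case $R=\emptyset$ both directions are checked by hand, and in the inductive step the three defining properties of an $\L_0$-pseudo-intent are verified one by one (each by contradiction) for any $R$ whose implication gets added, while conversely the induction hypothesis guarantees that every $\L_0$-pseudo-intent $R$ is $(\Lurel\cup\L_0)$-closed at the moment \textsc{NextClosure} reaches it and hence is actually proposed. Your reduction buys conceptual clarity --- it explains \emph{why} the algorithm works by exhibiting it as a known procedure run on $\U$ --- but it pushes all the real work into the bookkeeping step you yourself flag as the obstacle (making $R^{\Box\Diamond}$ in the growing example context $\K$ coincide with $R^{II}$ in $\U$ after the inner loop), and a careful write-up of that bisimulation between the multi-expert incomplete-context algorithm and the single-expert complete-context one would end up reproducing most of the inductive argument anyway. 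The paper's direct induction is less modular but stays self-contained and sidesteps having to formalise that correspondence. For part~1 your direct argument via \cref{lem:valid-in-subposition-iff-shared-implication} is essentially the contrapositive of the paper's, so there the two approaches coincide.
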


\begin{proof}

We prove \Cref{thm:algo-correct} 1. by contraposition.
Assume that $R\implies S$ is an implication that does not hold in $\U$.
If $R\implies S$ does not hold in $\U$ then there is an object $g\in \overline\K_i$ in some $\L_i$-completion $\overline\K_i$ that is a counterexample to the implication.
Since $\Imp[\overline\K_i]=\L_i$ it follows that $R\implies S\not\in \L_i$.
Hence, given the question ``Does $R\implies S$ hold?'' the expert $E_i$ does not accept the implication as valid.
Therefore, $S$ is not accepted to follow from $R$ by all experts and is not accepted as a shared implication in \Cref{line:d} of \Cref{alg:explore-shared-implications}.

    The proof of \Cref{thm:algo-correct} 2. is similar to that of \cite[Prop. 34]{ganter2016conceptual}.
  We prove this by induction over the premise size $k$ of $R$.
  We begin with the base case $R=\emptyset$.
  
  ``$\Rightarrow$'':
  An implication $R\implies S$ is added to $\Lurel$ if the conclusion $S$ is maximal and $S\not= R$, i.e., if $S=R^{II}$ and $R\not=R^{II}$.
  Hence, $\emptyset \implies S$ is added to $\Lurel$ if $S=\emptyset^{II}$ and $\emptyset\not= \emptyset^{II}$ and thus, $\emptyset$ is an $\L_0$-pseudo-intent. 
  
  ``$\Leftarrow$'':
  Now let $\emptyset$ be an $\L_0$-pseudo-intent.
  The implication $\emptyset\implies \emptyset^{II}$ holds by definition. It is added to $\Lurel$  because $\emptyset^{II}$ is the largest set for which all experts agree that it follows from $\emptyset$, and $\emptyset\not=\emptyset^{II}$ by definition of $\L_0$-pseudo-intent.

  Assume now that the proposition holds for all subsets $N$ of $M$ with  $|N|\leq k$.

  
  ``$\Rightarrow$'': 
  Let $R\implies S$ be a valid implication in $\K$ with $|R|=k+1$ and added to $\Lurel$.
  We show that $(i)$ $S=R^{II}$ and that $(ii)$ $R$ is an $\L_0$-pseudo-intent.
  Ad $(i)$: Assume that there exists $m\in R^{\Box\Diamond}\setminus R^{II}$.
  Then some expert does not confirm (with ``yes'') that the attribute follows from $R$ and a (real or artificial) counterexample is added. Thus we have $S=R^{II}$.
  Because the implication is added to $\Lurel$, we have $R\not= R^{II}$.
  Ad $(ii)$:
  Assume $R$ is not an $\L_0$-pseudo-intent. 
  Then at least one of the properties of the definition of $\L_0$-pseudo-intent does not hold.
  We show that each case yields a contradiction ($\lightning$).
  If $R$ does not respect $\L_0$ then $R$ is not suggested as premise because it is not $(\Lurel\cup\L_0)$-closed and $R\implies S$ can not be added to $\Lurel$.$\lightning$
  If $R=R^{II}$ then $R\implies R^{II}$ can not be added to $\Lurel$.$\lightning$
  If there exists an $\L_0$-pseudo-intent $P\subset R$ with $P^{II}\not\subseteq R$ then $P\implies P^{II}$ is in $\Lurel$ by induction hypothesis and because implications are added to $\Lurel$ in lectic order with respect to their premises.
  But then $R$ is not $(\Lurel\cup\L_0)$-closed and is not suggested as premise by the algorithm and thus, $R\implies S$ can not be added to $\Lurel$.$\lightning$
  Hence, $R$ is an $\L_0$-pseudo-intent.

  ``$\Leftarrow$'':
  Now let $R$ be an $\L_0$-pseudo-intent with $|R|=k+1$.
  We show that $R\implies R^{II}$ is added to $\Lurel$.
  For any implication $P\implies Q$ in $\Lurel$ with $P\subset R$ we have by the induction hypothesis that $P$ is an $\L_0$-pseudo-intent and $Q=P^{II}$.
  Because $R$ is an $\L_0$-pseudo-intent we have $P^{II}\subseteq R$.
  Hence, $R$ is $(\Lurel\cup\L_0)$-closed and the implication $R\implies R^{\Box\Diamond}$ will be suggested to the experts.
  Then $R^{II}$ is the maximal set of attributes for which all experts agree that it follows from $R$ since for every other attribute $R^{\Box\Diamond}\setminus R^{II}$ some expert has a counterexample.
  By definition of $\L_0$-pseudo-intent we have $R\not= R^{II}$ and the implication $R\implies R^{II}$ is added to $\Lurel$.
\end{proof}

\begin{corollary}
  Upon termination of \cref{alg:explore-shared-implications},
  the output $\Lurel$ is the canonical base of $\U=\bigsubpos_{E_i\in\experts}\overline\K_i$ relative to $\L_0$, and $\Lurel\cup\L_0$ is a base of $\bigcap_{E_i\in\experts}\L_i$.
  Further, $\forall R\subseteq M$ we have $R^{II}$(in $\U$) = $R^{\Box\Diamond}$ (in $\K$) at the end of the exploration.
\end{corollary}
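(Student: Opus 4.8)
The plan is to present this corollary as the routine ``wrap‑up'' of an exploration run: almost everything is already in \cref{thm:algo-correct}, and what remains is the standard bookkeeping around \textsc{NextClosure} with a growing implication set. I would import two facts. First, by the classical argument for attribute exploration (cf.\ the proof of \cite[Prop.~34]{ganter2016conceptual}), the outer loop terminates at $R=M$ and, over the whole run, its premises $R$ range in lectic order exactly over the closed sets of the final $\Lurel\cup\L_0$ together with all $\L_0$-pseudo-intents of $\U$; in particular every $\L_0$-pseudo-intent of $\U$ is offered as a premise (its smaller pseudo-intents have already contributed their implications, and a pseudo-intent is closed under $\L_0$ and, by property~3 of its definition, under those implications), and there it triggers the inner loop since $R\neq R^{II}$. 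Second, because $\L_0$ is assumed to hold in every expert's view we have $\L_0\subseteq\bigcap_{E_i\in\experts}\L_i=\Imp[\U]$ by part~2 of \cref{lem:valid-in-subposition-iff-shared-implication}, so speaking of the canonical base of $\U$ relative to $\L_0$ is legitimate.

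Given these, the first claim drops out of the second part of \cref{thm:algo-correct}: the implications put into $\Lurel$ are exactly the $R\implies R^{II}$ for $R$ an $\L_0$-pseudo-intent of $\U$, and by the first imported fact all of those are produced, so on termination $\Lurel$ is precisely the $\L_0$-relative canonical base of $\U$. The second claim then follows from the properties of the relative canonical base recalled in \cref{sec:foundations}: $\Lurel\cup\L_0\subseteq\Imp[\U]$ (the first part of \cref{thm:algo-correct} together with $\L_0\subseteq\Imp[\U]$) gives soundness; ``every implication holding in $\U$ follows from $\Lurel\cup\L_0$'' (valid since $\L_0$ holds in $\U$) gives completeness; and $\Imp[\U]=\bigcap_{E_i\in\experts}\L_i$ again by \cref{lem:valid-in-subposition-iff-shared-implication}. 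Hence $\Lurel\cup\L_0$ is a base of $\bigcap_i\L_i$.

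The third claim is where I expect the real work. I would first verify that every enlarged example context stays below its completion, $\K_i\le\overline{\K_i}$: a genuine counterexample from $E_i$ is an object of $E_i$'s view and therefore refines some object of $\overline{\K_i}$, while an artificial object $g_{R\not\implies m}$ is introduced only when $R\implies m\notin\L_i=\Imp[\overline{\K_i}]$ (\cref{lem:L-closed-context-has-L-implications}), so $\overline{\K_i}$ contains an object certainly having $R$ but lacking $m$, which $g_{R\not\implies m}$ refines in the information order. Passing to subpositions yields $\K\le\U$ for the final example context $\K$, so $\U$ is a completion of $\K$; hence every implication of $\Imp[\U]=\bigcap_i\L_i$ is satisfiable in $\K$, and applying this to $R\implies R^{II}$ gives $R^{II}\subseteq R^{\Box\Diamond}$ for every $R\subseteq M$. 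For the converse inclusion, set $R^{\bullet}:=(\Lurel\cup\L_0)(R)$; by the second claim $\Lurel\cup\L_0$ generates $\Imp[\U]$, so $R^{\bullet}=R^{II}$ (closure in $\U$) and $R^{\bullet}$ is an intent of $\U$, in particular a closed set of $\Lurel\cup\L_0$ and therefore processed as a premise by \cref{alg:explore-shared-implications}. While $R^{\bullet}$ is being processed the inner loop trims $(R^{\bullet})^{\Box\Diamond}$ down to $R^{\bullet}$ in the current $\K$ — each $m\in(R^{\bullet})^{\Box\Diamond}\setminus R^{\bullet}$ has $R^{\bullet}\implies m\notin\bigcap_i\L_i$ and is thus refuted by some expert who supplies a counterexample (the ``Ad $(i)$'' reasoning in the proof of \cref{thm:algo-correct}) — and since $\K$ only grows afterwards while $\K\le\U$ permanently forces $R^{\bullet}=(R^{\bullet})^{II}\subseteq(R^{\bullet})^{\Box\Diamond}$, the value $(R^{\bullet})^{\Box\Diamond}=R^{\bullet}$ survives to the end. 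As $\cdot^{\Box\Diamond}$ is monotone in the premise, $R\subseteq R^{\bullet}$ yields $R^{\Box\Diamond}\subseteq(R^{\bullet})^{\Box\Diamond}=R^{\bullet}=R^{II}$, which closes the equality.

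The main obstacle is this last claim: one has to track the evolution of the example contexts closely enough to be sure they never overshoot $\U$, and then argue that at termination $\cdot^{\Box\Diamond}$ on the accumulated examples has been ``saturated'' on every $(\Lurel\cup\L_0)$-closed set, the arbitrary-$R$ case then reducing to a one-line monotonicity step. The first two claims, by contrast, are essentially bookkeeping on top of \cref{thm:algo-correct} and the known properties of the relative canonical base.
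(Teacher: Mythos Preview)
Your proposal is correct and rests on exactly the two ingredients the paper invokes --- \cref{thm:algo-correct} and \cref{lem:valid-in-subposition-iff-shared-implication} --- so the approach is the same. The difference is entirely one of detail: the paper's own proof is the single sentence ``This follows from \cref{thm:algo-correct} and \cref{lem:valid-in-subposition-iff-shared-implication}'', whereas you actually unpack the argument, in particular for the third claim $R^{II}=R^{\Box\Diamond}$, where you carefully verify $\K\le\U$ throughout (handling both genuine and artificial counterexamples), establish $R^{II}\subseteq R^{\Box\Diamond}$ via satisfiability, and obtain the reverse inclusion by showing that every $(\Lurel\cup\L_0)$-closed set is visited as a premise and has its $\Box\Diamond$-closure trimmed down permanently. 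One minor caveat: the step ``$g_{R\not\implies m}$ refines some object of $\overline{\K_i}$'' is correct in spirit but, read strictly against the paper's information order (which requires $G_1\subseteq G_2$), needs the artificial object to be \emph{identified} with a suitable object of $\overline{\K_i}$; since $\overline{\K_i}$ contains an object for every model of $\L_i$ this identification is available, and the paper itself works ``up to clarification'' in \cref{lem:valid-in-subposition-iff-shared-implication}, so this is a cosmetic point rather than a gap.
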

\begin{proof}
This follows from \cref{thm:algo-correct} and \cref{lem:valid-in-subposition-iff-shared-implication}.
\end{proof}

\subsubsection{Explore the System of Shared Implications.}
Now that we know how to obtain the shared implications for some group of experts $\experts$, let us consider how to obtain the shared implication theories for some or all subsets of $\experts$.
Essentially, the goal is to obtain the concept lattice of the context of shared implications $\C^{\times}$.

From \cite{FeldeStumme2020TriadicExploration} we know that there are several viable strategies:
One option is to explore the domain for each expert separately, \ie, obtain the columns of $\C^{\times}$, then combine the results to obtain the context $\C^{\times}$ and from this compute the concept lattice $\BV(\C^{\times})$.
This has the advantage that we can parallelize the individual explorations because there is no coordination overhead during the exploration step.
However, this also means that we need all explorations to be finished before we obtain any results.
In particular, if we are only interested in the shared implications of all experts or only of a subset of $\BV(\C^{\times})$, this approach usually asks more questions than necessary.

Another option is to explore the lattice $\BV(\C^{\times})$ from bottom to top, \ie, explore the shared implications for all non-empty expert subsets of $\experts$ from largest to smallest, and reduce the amount of questions in later explorations by using the already discovered shared implications as background information.
This approach has some coordination overhead and the explorations can only be parallelized for explorations with the same number of experts.
But, it also has the advantage that only the questions necessary to find the shared implications of interest are posed.
In particular, we do not need to explore all of $\BV(\C^{\times})$ if we are only interested in some subset of the system of shared implications.
In \Cref{alg:explore-all-shared-implications} we provide an implementation of this approach in pseudo-code.

\begin{algorithm}[!t]
    \small \SetKwComment{Comment}{}{} \SetKw{Kwin}{in} 
    \DontPrintSemicolon \SetAlgoLined
    \SetKwInOut{KWInteractive}{Interactive Input}
    \SetKwComment{ic}{}{}
    \KwIn{%
A family of (possibly empty) incomplete contexts $K=\{\K_1,\ldots,\K_k\}$ containing examples given by the experts $\experts = \{E_1,\ldots,E_k\}$, a context $\C=(G_{\C},\experts,I_{\C})$ of shared implications (also possibly empty).
  }
    \KwOut{%
        A possibly enlarged family of examples $K$ and the context of shared implications $\C$ which contains the experts responses.
  
    }
\For{$\tilde \experts$ in linear extension of $(\mathcal{P}(\experts)\setminus\emptyset,\supseteq)$}{
  $\tilde K =$ subset of $K$ corresponding to the experts in $\tilde \experts$\;
  $\tilde \L = \tilde\experts^{\Box}$ (in $\C$)\;
  $\hat\L,\hat K,\hat\C = $ explore-shared-implications($M$,$\tilde K$,$\tilde\experts$,$\tilde\L$)\;
  merge $\hat K$ into K\;
  merge $\hat \C$ into $\C$ and ?-reduce where possible\;
}
\Return $\C$, $K$

 \caption{Explore the System of Shared Implications}
 \label{alg:explore-all-shared-implications}
\end{algorithm}

No matter which approach is chosen, we need to consider how to merge the results of multiple explorations with different subset of experts.
In order to merge the example contexts for each expert we use the incomplete context supremum.
Since a single expert's view is consistent for multiple explorations the example contexts for this expert do not contain any contradicting information and the supremum context exists.
The contexts of shared implications can also be merged using the incomplete context supremum.
The supremum always exists because the answers from each expert are consistent across multiple explorations.
Joining contexts of shared implications introduces some ``\q'' in the context because not all experts have responded to all questions from all explorations.
However, some of these  might be answered using other responses given by an expert; either because they follow from the accepted implications or because some given example contradicts the validity of the implication.
Therefore, after merging multiple contexts of shared implications in a second step we check if some of the newly introduced ``\q'' can be inferred from already obtained information.
 

\subsection{Improving Collaboration}
If we assume that experts for a domain want to reach a more detailed view of the domain the suggested approach of only exploring their shared views is insufficient.
However, once some shared views are explored we can use the obtained information to examine the conflicts, inconsistencies and unknowns, for example:
\begin{itemize}
\item Implications which only have artificial counterexamples.
\item Implications which are accepted by most experts and unknown to or rejected by only a few.
\item Controversial object attribute relations, i.e., relations that are accepted by some experts and rejected by others.
\item Implications and examples that conflict with each other for any two experts.
\end{itemize}
This helps a group of experts find a larger common ground and allows them to address some of the sources of disagreement.
We do not believe, however, that the exploration procedure would benefit from incorporating a conflict resolution step.
Rather, the exploration serves as a means to establish a baseline of commonalities, to make disagreements visible and to enable further cooperation.
 Discussion and potential resolution of conflicts should be a second, separate step.

\section{Conclusion and Outlook}
\label{sec:conclusion-outlook}

We have expanded on ideas for attribute exploration with multiple domain experts raised in \cite{FeldeStumme2020TriadicExploration} and provided a theoretical framework which builds on a multitude of previous works in the realm of formal concept analysis.
The resulting attribute exploration algorithm is an extension of attribute exploration that allows for multiple experts, incomplete information and background information.
 
Our approach serves as a step towards collaborative exploration for domains where experts might hold conflicting views.
An exploration of the shared views of a group of experts provides a structured approach to uncovering commonalities and differences in the experts views and can serve as a baseline for further investigations.
An example for the exploration of shared views is provided in the appendix.
As a next step, we will examine the properties of the two proposed approaches to explore $\BV(\C^{\times})$ in more detail, in particular with respect to the required expert input.

\bibliographystyle{splncs04}
\bibliography{paperbib.bib}

\begin{thebibliography}{10}
\providecommand{\url}[1]{\texttt{#1}}
\providecommand{\urlprefix}{URL }
\providecommand{\doi}[1]{https://doi.org/#1}

\bibitem{Belnap1977}
Belnap, N.D.: A Useful Four-Valued Logic, pp. 5--37. Springer, Dordrecht (1977)

\bibitem{burmeister1991merkmalimplikationen}
Burmeister, P.: Merkmalimplikationen bei unvollständigem wissen. In: Lex, W.
  (ed.) Arbeitstagung Begriffsanalyse und Künstliche Intelligenz. pp. 15--46.
  No.~89/3 in Informatik-Bericht, Clausthal-Zellerfeld (1991)

\bibitem{conf/iccs/BurmeisterH00}
Burmeister, P., Holzer, R.: On the treatment of incomplete knowledge in formal
  concept analysis. In: Ganter, B., Mineau, G.W. (eds.) ICCS. LNCS, vol.~1867,
  pp. 385--398. Springer (2000)

\bibitem{DBLP:journals/corr/abs-1908-08740}
Felde, M., Stumme, G.: Interactive collaborative exploration using incomplete
  contexts. CoRR abs/1908.08740  (2019)

\bibitem{FeldeStumme2020TriadicExploration}
Felde, M., Stumme, G.: Triadic exploration and exploration with multiple
  experts. In: Braud, A., Buzmakov, A., Hanika, T., Le~Ber, F. (eds.) Formal
  Concept Analysis. pp. 175--191. Springer International Publishing, Cham
  (2021)

\bibitem{fitting1991kleene}
Fitting, M.: Kleene's logic, generalized. Logic and Computation  \textbf{1}(6),
   797--810 (1991)

\bibitem{ganter1984two}
Ganter, B.: Two basic algorithms in concept analysis (preprint 831) (831)
  (1984), reprinted 2010 in: Kwuida L., Sertkaya B. (eds) Formal Concept
  Analysis. ICFCA 2010. LNCS, vol 5986. Springer

\bibitem{ganter1999attribute}
Ganter, B.: Attribute exploration with background knowledge. Theoretical
  Computer Science  \textbf{217}(2),  215--233 (1999)

\bibitem{GanterObiedkov04Triadic}
Ganter, B., Obiedkov, S.: Implications in triadic formal contexts. In: Proc.
  Intl. Conf. on Conceptual Structures. LNCS, vol.~3127, pp. 186--195. Springer
  (2004)

\bibitem{ganter2016conceptual}
Ganter, B., Obiedkov, S.: Conceptual Exploration. Springer (2016)

\bibitem{GanterWille1999}
Ganter, B., Wille, R.: Formal Concept Analysis: Mathematical Foundations.
  Springer-Verlag, Berlin/Heidelberg (1999)

\bibitem{journals/ci/Ginsberg88a}
Ginsberg, M.L.: Multivalued logics: a uniform approach to reasoning in
  artificial intelligence. Computational Intelligence  \textbf{4}(3),  265--316
  (1988)

\bibitem{guigues1986familles}
Guigues, J.L., Duquenne, V.: Familles minimales d'implications informatives
  résultant d'un tableau de données binaires. Mathématiques et Sciences
  Humaines  \textbf{95},  5--18 (1986)

\bibitem{conf/iccs/HanikaZ18}
Hanika, T., Zumbrägel, J.: Towards collaborative conceptual exploration. In:
  Chapman, P., Endres, D., Pernelle, N. (eds.) Proc. Intl. Conf. on Conceptual
  Structures. LNCS, vol. 10872, pp. 120--134. Springer (2018)

\bibitem{holzer2001dissertation}
Holzer, R.: {M}ethoden der formalen {B}egriffsanalyse bei der {B}ehandlung
  unvollst{\"a}ndigen {W}issens. {D}issertation {TU} {D}armstadt, {S}haker
  (2001)

\bibitem{holzer2004knowledgeP1}
Holzer, R.: Knowledge acquisition under incomplete knowledge using methods from
  formal concept analysis: Part i. Fundamenta Informaticae  \textbf{63}(1),
  17--39 (2004)

\bibitem{holzer2004knowledgeP2}
Holzer, R.: Knowledge acquisition under incomplete knowledge using methods from
  formal concept analysis: Part ii. Fundamenta Informaticae  \textbf{63}(1),
  41--63 (2004)

\bibitem{conf/iccs/Kriegel16}
Kriegel, F.: Parallel attribute exploration. In: Haemmerlé, O., Stapleton, G.,
  Faron-Zucker, C. (eds.) Proc. Intl. Conf. on Conceptual Structures. LNCS,
  vol.~9717, pp. 91--106. Springer (2016)

\bibitem{stumme96attribute}
Stumme, G.: Attribute exploration with background implications and exceptions.
  In: Bock, H.H., Polasek, W. (eds.) Data Analysis and Information Systems.
  Statistical and Conceptual approaches. Proc. GfKl'95. Studies in
  Classification, Data Analysis, and Knowledge Organization 7. pp. 457--469.
  Springer, Heidelberg (1996)

\bibitem{Wille82}
Wille, R.: Restructuring lattice theory: An approach based on hierarchies of
  concepts. In: Rival, I. (ed.) Ordered Sets. pp. 445--470. Reidel, Dordrecht
  (1982)

\end{thebibliography}
   
\newpage
\appendix
\section{Appendix}
\label{appendix}

\subsubsection{Example.}
In the following appendix we give an example for the exploration of shared implications.
The data set is derived from the \emph{BSI-Grundschutz-Kom\-pen\-dium}%
\footnote{\url{https://www.bsi.bund.de/SharedDocs/Downloads/EN/BSI/Grundschutz/International/bsi_it_gs_comp_2021.html}}%
, a publication by the German \emph{Federal Office for information Security}.
This publication offers standardized security requirements for typical business processes, applications, IT systems, communication links and rooms.
%
%
It contains a list of 47 potential risks, called \emph{elementary threats} (short: \emph{threats}), that are used to analyze the security requirements.
The threats (ranging from fire, water and natural disasters over eavesdropping, disclosure of sensitive information and manipulation of information to identity theft, sabotage and social engineering) will serve as our attributes.
The security requirements are organized in modules that are comprised of \emph{building blocks}.
In each building block the relevant threats are identified and a list of \emph{countermeasures} (short: \emph{measures}) is given.
The measures are different for each building block and each measure is effective against some of the elementary threats.
The measures will serve as our objects.

Hence, each building block gives rise to a formal context $\K=\GMII[i]$ of measures and elementary threats with the incidence relation being: A measure is in relation to an elementary threat if the measure is effective against the threat.
  
In our example we explore the shared implications of four building blocks, namely, APP.1.1, CON.1, ORP.1 and SYS.1.1.
Each block  is the first building block in one of the modules.
We restrict our example to these four building blocks and the attributes 18\ldots 22, cf. \Cref{table:appendix-partial-list-elementary-threats}, to keep it reasonably small.
For our example assume that we have one expert to consult for each of the building blocks.
The experts' views (consisting of the context and its implications) are presented in \Cref{fig:appendix-app,fig:appendix-con,fig:appendix-orp,fig:appendix-sys}.

An implication $R\implies S$ that holds in an expert's view can be interpreted as:
Measures effective against all threats in $R$ are also effective against all threats in $S$ (for a particular building block).

The exploration of all shared implications starts with the exploration of the shared implications for APP.1.1, CON.1, ORP.1 and SYS.1.1.
In \Cref{  appendix-exploration-full-table  } we present the questions and answers for the exploration of all shared implications.
They are arranged from top to bottom in chronological order.
The first column of \Cref{  appendix-exploration-full-table  } specifies which experts are consulted.
Each block of the table with the same experts corresponds to one exploration of their shared implications.
The second column contains the question that is being posed, and then one row for each attribute in the conclusion.
The next block of columns, APP.1.1, CON.1, ORP.1 and SYS.1.1, contains the answers given by the respective experts.
Note that there is an empty space if the expert was not consulted for this question, i.e., is not present in the first column.
The last column contains the counterexamples given when an attribute does not follow from the premise.

The result of the exploration of shared implications, more precisely, the context of shared implications $\C^{\x}$ as well as its corresponding concept lattice are presented in \Cref{fig:appendix-result}.
Note that the extents of the concepts of $\C^{\x}$ are not implication theories of shared implications but generators for them.
They consist of the implications from the relative canonical base and the respective background implications.
And, because the background implications are a union of ``stacked'' relative canonical bases the implications in the generators are complete but not necessarily irredundant.

After exploring all shared implications for the group of experts, we find, for example, that the implication $(22)\implies (21)$ does not hold for any of the views, and that $(18, 20)\implies (21)$ holds for APP.1.1 and ORP.1 but not for CON.1 and SYS.1.1.

\begin{table}[t]
  \centering
  \caption{Partial list of elementary threats.}
  \begin{tabular}{l}
    \hline
  Elementary Threats\\ 
  \hline
  01 Fire\\
  02 Unfavorable Climatic Conditions\\
  \vdots\\
  18 Poor Planning or Lack of Adaptation\\
  19 Disclosure of Sensitive Information\\
  20 Information or Products from an Unreliable Source\\
  21 Manipulation with Hardware or Software\\
  22 Manipulation of Information\\
  \vdots\\
    47 Harmful Side Effects of IT-Supported Attacks\\
    \hline
\end{tabular}

\label{table:appendix-partial-list-elementary-threats}
\end{table}

\begin{figure}
  \begin{minipage}{\textwidth}
      \centering
    \resizebox{0.8\textwidth}{!}{
\includegraphics[height=10cm]{appendix/APP.1.1lattice.tikz} }
\end{minipage}

\begin{minipage}{1.0\linewidth}
\vspace{1em}
\end{minipage}

\begin{minipage}{\textwidth}
    \centering
\input{appendix/APP.1.1.cxt}
\end{minipage}
\ \\
The canonical base of APP.1.1 is:\\
\input{appendix/APP.1.1impls.tex}
  \caption{The (complete) view $(\K,\L)$ of the expert for APP.1.1.}
  \label{fig:appendix-app} 
\end{figure}

\begin{figure}
  \begin{minipage}{\textwidth}
      \centering
    \resizebox{0.8\textwidth}{!}{
\includegraphics[height=10cm]{appendix/xCON.1lattice.tikz} }
\end{minipage} 

\begin{minipage}{1.0\linewidth}
\vspace{1em}
\end{minipage}

\begin{minipage}{\textwidth}
    \centering
\input{appendix/xCON.1.cxt}
\end{minipage}
\ \\ 
The canonical base of CON.1 is:\\
\input{appendix/xCON.1impls.tex}

\caption{The (complete) view $(\K,\L)$ of the expert for CON.1.}
\label{fig:appendix-con}
\end{figure}

\begin{figure}  
  \begin{minipage}{\textwidth}
  \centering
      \resizebox{0.5\textwidth}{!}{
\includegraphics[height=10cm]{appendix/ORP.1lattice.tikz}}
\end{minipage}

\begin{minipage}{1.0\linewidth}
\vspace{1em}
\end{minipage}

\begin{minipage}{\textwidth}
    \centering
\input{appendix/ORP.1.cxt}
\end{minipage}
\ \\ 
The canonical base of ORP.1 is:\\   
\input{appendix/ORP.1impls.tex}

\caption{The (complete) view $(\K,\L)$ of the expert for ORP.1.}
\label{fig:appendix-orp}
\end{figure}

\begin{figure}
  \begin{minipage}{\textwidth} 
    \centering 
     \resizebox{0.8\textwidth}{!}{
\includegraphics[height=11cm]{appendix/SYS.1.1lattice.tikz}}
\end{minipage}

\begin{minipage}{1.0\linewidth}
\vspace{1em}
\end{minipage}
 
\begin{minipage}{\textwidth}
    \centering
\input{appendix/SYS.1.1.cxt}
\end{minipage}
\ \\
The canonical base of SYS.1.1 is:\\
\input{appendix/SYS.1.1impls.tex}

\caption{The (complete) view $(\K,\L)$ of the expert for SYS.1.1.}
\label{fig:appendix-sys}
\end{figure}


\begin{table}
  \begin{adjustwidth}{-1cm}{-1cm}
    \centering
    \caption{ Exploration of all shared implications.  }
\footnotesize
\begin{tabular}{|c|l|cccc|l|}
\hline
Experts & \parbox{4.5cm}{Implicational Question:\newline Which part of the conclusion follows from the premise?}  & \rotatebox{90}{APP.1.1} & \rotatebox{90}{CON.1} & \rotatebox{90}{ORP.1} & \rotatebox{90}{SYS.1.1} & Counterexamples \\
\hline
\hline
APP.1.1, CON.1, ORP.1, SYS.1.1 & $\emptyset$ $\rightarrow$ (18 19 20 21 22) ? &&&&& \\
 &\phantom{$\emptyset$} $\rightarrow$ (18)  & .  & .  & .  & . & A16, C11, O04, S06 \\
 &\phantom{$\emptyset$} $\rightarrow$ (19)  & .  & .  & x  & . & A16, C14, S01 \\
 &\phantom{$\emptyset$} $\rightarrow$ (20)  & .  & .  & .  & . & A02, C03, O13, S06 \\
 &\phantom{$\emptyset$} $\rightarrow$ (21)  & .  & .  & .  & . & A16, C11, O13, S35 \\
 &\phantom{$\emptyset$} $\rightarrow$ (22)  & .  & .  & .  & . & A06, C11, O16, S35 \\
\cline{2-7} & (21) $\rightarrow$ (18 20) ? &&&&& \\
 &\phantom{(21)} $\rightarrow$ (18)  & .  & .  & x  & . & A11, C16, S21 \\
 &\phantom{(21)} $\rightarrow$ (20)  & x  & .  & x  & . & C16, S21 \\
\cline{2-7} & (20 21 22) $\rightarrow$ (18 19) ? &&&&& \\
 &\phantom{(22 21 20)} $\rightarrow$ (18)  & x  & x  & x  & . & S31 \\
 &\phantom{(22 21 20)} $\rightarrow$ (19)  & x  & x  & x  & . & S31 \\
\cline{2-7} & (19 21) $\rightarrow$ (18 20 22) ? &&&&& \\
 &\phantom{(19 21)} $\rightarrow$ (18)  & x  & x  & x  & . & S34 \\
 &\phantom{(19 21)} $\rightarrow$ (20)  & x  & x  & x  & . & S34 \\
 &\phantom{(19 21)} $\rightarrow$ (22)  & x  & x  & x  & x &  \\
\cline{2-7} & (19 20 22) $\rightarrow$ (18 21) ? &&&&& \\
 &\phantom{(19 22 20)} $\rightarrow$ (18)  & .  & x  & x  & x & A15 \\
 &\phantom{(19 22 20)} $\rightarrow$ (21)  & .  & x  & x  & x & A15 \\
\cline{2-7} & (19 20 21 22) $\rightarrow$ (18) ? &&&&& \\
 &\phantom{(19 22 21 20)} $\rightarrow$ (18)  & x  & x  & x  & x &  \\
\cline{2-7} & (18 22) $\rightarrow$ (19) ? &&&&& \\
 &\phantom{(18 22)} $\rightarrow$ (19)  & x  & x  & x  & x &  \\
\cline{2-7} & (18 21) $\rightarrow$ (20) ? &&&&& \\
 &\phantom{(18 21)} $\rightarrow$ (20)  & x  & x  & x  & x &  \\
\cline{2-7} & (18 19 20 22) $\rightarrow$ (21) ? &&&&& \\
 &\phantom{(18 19 22 20)} $\rightarrow$ (21)  & x  & x  & x  & x &  \\
\hline
\hline
CON.1, ORP.1, SYS.1.1 & (20 22) $\rightarrow$ (21) ? &&&&& \\
 &\phantom{(22 20)} $\rightarrow$ (21) &  & x  & x  & x &  \\
\cline{2-7} & (20 21) $\rightarrow$ (22) ? &&&&& \\
 &\phantom{(21 20)} $\rightarrow$ (22) &  & x  & x  & x &  \\
\cline{2-7} & (18) $\rightarrow$ (19) ? &&&&& \\
 &\phantom{(18)} $\rightarrow$ (19) &  & .  & x  & . & C15, S13 \\
\cline{2-7} & (18 20) $\rightarrow$ (19) ? &&&&& \\
 &\phantom{(18 20)} $\rightarrow$ (19) &  & .  & x  & . & C09, S11 \\
\hline
\hline
APP.1.1, CON.1, SYS.1.1 & (18 19) $\rightarrow$ (20) ? &&&&& \\
 &\phantom{(18 19)} $\rightarrow$ (20)  & .  & . &  & x & A12, C07 \\
\cline{2-7} & (18 19 22) $\rightarrow$ (20 21) ? &&&&& \\
 &\phantom{(18 19 22)} $\rightarrow$ (20)  & x  & x &  & x &  \\
 &\phantom{(18 19 22)} $\rightarrow$ (21)  & x  & x &  & x &  \\
\hline
\hline
APP.1.1, CON.1, ORP.1 & (18 19 20) $\rightarrow$ (21 22) ? &&&&& \\
 &\phantom{(18 19 20)} $\rightarrow$ (21)  & x  & .  & x & & C04 \\
 &\phantom{(18 19 20)} $\rightarrow$ (22)  & x  & .  & x & & C04 \\
\hline
\hline
ORP.1, SYS.1.1 & (19 20) $\rightarrow$ (18) ? &&&&& \\
 &\phantom{(19 20)} $\rightarrow$ (18) & &  & x  & x &  \\
\hline
\hline
CON.1, ORP.1 & (21) $\rightarrow$ (22) ? &&&&& \\
 &\phantom{(21)} $\rightarrow$ (22) &  & x  & x & &  \\
\hline
\hline
APP.1.1, ORP.1 & (19 20) $\rightarrow$ (22) ? &&&&& \\
 &\phantom{(19 20)} $\rightarrow$ (22)  & . &  & x & & A17 \\
\cline{2-7} & (18 20) $\rightarrow$ (21) ? &&&&& \\
 &\phantom{(18 20)} $\rightarrow$ (21)  & x &  & x & &  \\
\hline
\hline
SYS.1.1 & (22) $\rightarrow$ (21) ? &&&&& \\
 &\phantom{(22)} $\rightarrow$ (21) & & &  & . & S02 \\
\hline
\hline
ORP.1 & $\emptyset$ $\rightarrow$ (19) ? &&&&& \\
 &\phantom{$\emptyset$} $\rightarrow$ (19) & &  & x & &  \\
\hline
\end{tabular}
\end{adjustwidth}
\label{ appendix-exploration-full-table }
\end{table}

\begin{figure}[htp]
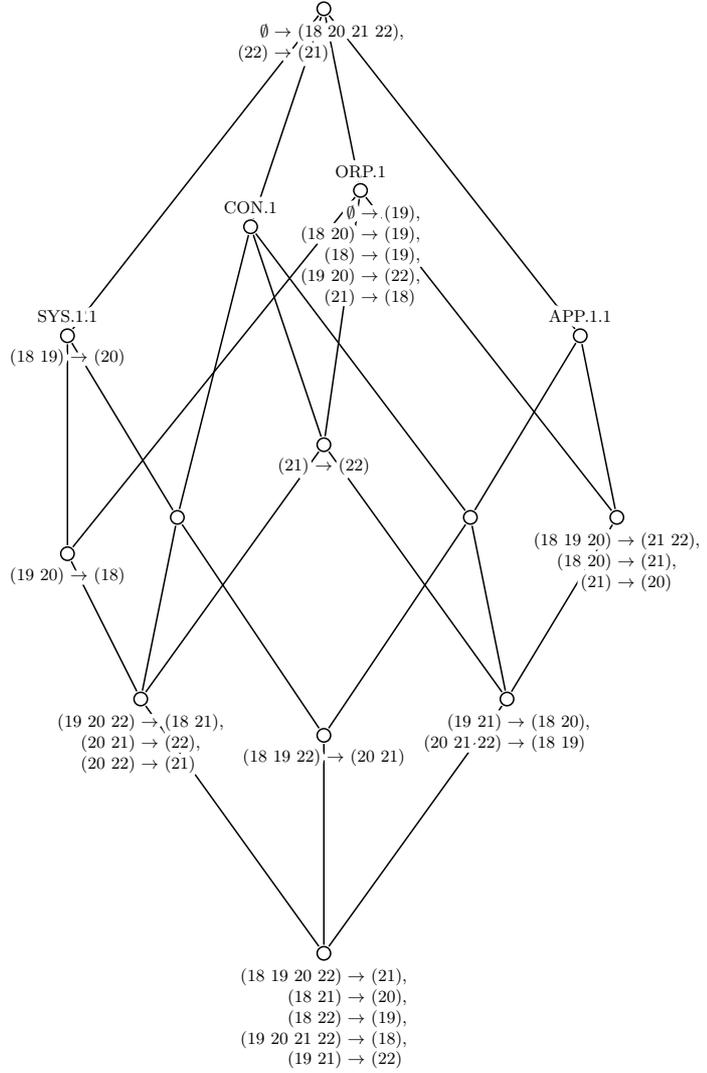

  \begin{adjustwidth}{-1cm}{-1cm}
    \begin{minipage}{0.35\textwidth}
    \centering
  \input{appendix/Explored.cxt}
\end{minipage}  
\begin{minipage}{0.65\textwidth} 
  \centering
  \resizebox{1.3\textwidth}{!}{
 \includegraphics[height=20cm]{appendix/Exploredlattice.tikz}}
\end{minipage}  
\end{adjustwidth}
 
\caption{Result of the exploration of shared implications.}
\label{fig:appendix-result}
\end{figure}

\end{document}